\title{Subgoaling Relaxation-based Heuristics for Numeric Planning with Infinite Actions}
\author {
    % Authors
    Ángel Aso-Mollar\textsuperscript{\rm 1},
    Diego Aineto\textsuperscript{\rm 1},
    Enrico Scala\textsuperscript{\rm 2},
    Eva Onaindia\textsuperscript{\rm 1}
}
\title{My Publication Title --- Single Author}
\author {
    Author Name
}
\title{My Publication Title --- Multiple Authors}
\author {
    % Authors
    First Author Name\textsuperscript{\rm 1},
    Second Author Name\textsuperscript{\rm 2},
    Third Author Name\textsuperscript{\rm 1}
}
\newtheorem{definition}{Definition}
\newtheorem{theorem}{Theorem}
\newtheorem{prop}{Proposition}
\newcommand{\pluseq}{\mathrel{+}=}
\begin{document}

\maketitle

\begin{abstract}

Numeric planning with control parameters extends the standard numeric planning model by introducing action parameters as free numeric variables that must be instantiated during planning. This results in a potentially infinite number of applicable actions in a state. In this setting, off-the-shelf numeric heuristics that leverage the action structure are not feasible. In this paper, we identify a tractable subset of these problems—namely, controllable, simple numeric problems—and propose an optimistic compilation approach that transforms them into simple numeric tasks. To do so, we abstract control-dependent expressions into bounded constant effects and relaxed preconditions. The proposed compilation makes it possible to effectively use subgoaling heuristics to estimate goal distance in numeric planning problems involving control parameters. Our results demonstrate that this approach is an effective and computationally feasible way of applying traditional numeric heuristics to settings with an infinite number of possible actions, pushing the boundaries of the current state of the art.

% \textcolor{blue}{. This represents a significant advancement over the current state of the art.}

% \textcolor{blue}{marking a notable step forward in the state of the art.}

% \textcolor{blue}{constituting a major improvement on existing state-of-the-art methods.}

% \textcolor{blue}{pushing the boundaries of the current state of the art.}

\end{abstract}

% Uncomment the following to link to your code, datasets, an extended version or similar.
% You must keep this block between (not within) the abstract and the main body of the paper.
% \begin{links}
%     \link{Code}{https://aaai.org/example/code}
%     \link{Datasets}{https://aaai.org/example/datasets}
%     \link{Extended version}{https://aaai.org/example/extended-version}
% \end{links}

% Models
\newcommand{\mdp}{\textsc{Mdp}}
\newcommand{\mdps}{\textsc{MDP}s}
\newcommand{\goalmdp}{Goal \mdp}
\newcommand{\goalmdps}{Goal \mdps}
\newcommand{\pomdps}{\textsc{Pomdp}s}
\newcommand{\pomdp}{\textsc{Pomdp}}
\newcommand{\goalpomdps}{Goal \pomdps}
\newcommand{\goalpomdp}{Goal \pomdp}

% Theorems, etc
% \newtheorem{thm}{Theorem}
\newtheorem{proposition}{Proposition}
\newtheorem{corollary}{Corollary}
\newtheorem{example}{Example}
% \newenvironment{proof}{\paragraph{Proof:}}{\hfill$\square$}

% Hector's tricks
\newcommand{\checkit}[1]{}
\newcommand{\ie}{{\textsl{i.e.}}}
\newcommand{\eqdef}{\stackrel{\hbox{\tiny{def}}}{=}}
\newcommand{\pair}[2]{{\langle #1,#2\rangle}}
\newcommand{\denselist}{\itemsep -1pt\partopsep 0pt}

% tuples etc.
\newcommand{\tuple}[1]{\langle #1 \rangle}

%%%%%%%%%%%%%%%%%%%%%%%%%% General Math

\newcommand{\A}{\mathcal{A}} \newcommand{\B}{\mathcal{B}}
\newcommand{\C}{\mathcal{C}} \newcommand{\D}{\mathcal{D}}
\newcommand{\E}{\mathcal{E}} \newcommand{\F}{\mathcal{F}}
\newcommand{\G}{\mathcal{G}} \renewcommand{\H}{\mathcal{H}}
\newcommand{\I}{\mathcal{I}} \newcommand{\J}{\mathcal{J}}
\newcommand{\K}{\mathcal{K}} \renewcommand{\L}{\mathcal{L}}
\newcommand{\M}{\mathcal{M}} \newcommand{\N}{\mathcal{N}}
\renewcommand{\O}{\mathcal{O}} \renewcommand{\P}{\mathcal{P}}
\newcommand{\Q}{\mathcal{Q}} \newcommand{\R}{\mathcal{R}}
\renewcommand{\S}{\mathcal{S}} \newcommand{\T}{\mathcal{T}}
\newcommand{\U}{\mathcal{U}} 
\newcommand{\V}{\mathcal{V}}
\newcommand{\W}{\mathcal{W}} \newcommand{\X}{\mathcal{X}}
\newcommand{\Y}{\mathcal{Y}} \newcommand{\Z}{\mathcal{Z}}

\newcommand{\BE}{{\mathcal{B}, \mathcal{E}}} 

\newcommand{\TAU}{\mathcal{T}} 

\newcommand{\set}[1]{\{#1\}}                      % set
\newcommand{\Set}[1]{\left\{#1\right\}}
\newcommand{\card}[1]{|{#1}|}                     % cardinality of a set
\newcommand{\Card}[1]{\left| #1\right|}
\newcommand{\sub}[1]{[#1]}
\newcommand{\tup}[1]{\langle #1\rangle}            % tuple
\newcommand{\Tup}[1]{\left\langle #1\right\rangle}

\newcommand{\pv}[1]{#1^{b}}
\newcommand{\sv}[1]{#1^{d}}

\newcommand{\hpp}{ \langle \pv{X},\sv{X},D,s_0,A,G,C \rangle }

% % % Names of things, acronyms, software, etc.

\newcommand{\PDDL}{\textsc{pddl}}
\newcommand{\PDDLplus}{\textsc{pddl}$+$}
\newcommand{\PDDLPlus}{\textsc{pddl}$+$}
\newcommand{\FPDDL}{$\smallint$-\textsc{pddl}$+$}
\newcommand{\ADL}{\textsc{adl}}
\newcommand{\FS}{\propername{FS}}
\newcommand{\FSTRIPS}{FSTRIPS}
\newcommand{\SMT}{\textsc{Smt}}
\newcommand{\SAT}{\textsc{Sat}}
\newcommand{\CP}{\textsc{Cp}}
\newcommand{\FMSMT}{\propername{FM-SMT}}
\newcommand{\HSHP}{\propername{HSHP}}
\newcommand{\CNF}{\propername{CNF}}
\newcommand{\Zthree}{\propername{Z3}}
\newcommand{\MathSat}{\propername{MathSAT}}
\newcommand{\LpSat}{\propername{LPSAT}}
\newcommand{\TmLpSat}{\propername{TM-LPSAT}}
\newcommand{\Colin}{\propername{COLIN}}
\newcommand{\PSR}{\textsc{Psr}}
\newcommand{\HBW}{\textsc{Hbw}}
\newcommand{\FF}{\propername{FF}}
\newcommand{\MetricFF}{\propername{MetricFF}}
\newcommand{\FSZero}{\propername{FS0}}
\newcommand{\FSPlus}{\propername{FS+}}
\newcommand{\LTI}{\textsc{Lti}}
\newcommand{\HA}{\textsc{Ha}}
\newcommand{\UPMurphi}{\propername{UPMurphi}}
\newcommand{\Dino}{\propername{DiNo}}
\newcommand{\dreal}{\propername{dReal}}
\newcommand{\Convoys}{\textsc{Convoys}}
\newcommand{\ENHSP}{\propername{ENHSP}}

\newcommand{\Kongming}{\propername{Kongming}}
\newcommand{\GraphPlan}{\propername{GraphPlan}}
\newcommand{\SpaceEx}{\propername{SpaceEx}}
\newcommand{\ode}{\textsc{ode}}
\newcommand{\blind}{\textsc{Brfs}}
\newcommand{\uct}{\textsc{Uct}}
\newcommand{\iwk}{\textsc{IW}($k$)}
\newcommand{\PNE}{\propername{PNE}}
\newcommand{\Fstrips}{\textsc{FSTRIPS}}
\newcommand{\FOL}{\textsc{FOL}}
\newcommand{\MPC}{\textsc{MPC}}
\newcommand{\TFD}{\propername{TFD}}
\newcommand{\AIBR}{\textsc{Aibr}}
\newcommand{\RRT}{\textsc{RRT}}
\newcommand{\SMTPlan}{\textsc{SMTPlan+}}
\newcommand{\TwoLPBlind}{$2$\textsc{LP}--BRFS}
\newcommand{\TwoLPGBFS}{$2$\textsc{LP}--GBFS}
\newcommand{\TwoLPSAT}{$2$\textsc{LP}--SAT}

\newcommand{\pre}{\mathsf{pre}}
\newcommand{\eff}{\mathsf{eff}}
\newcommand{\Pre}{\mathsf{Pre}}
\newcommand{\Eff}{\mathsf{Eff}}
\newcommand{\add}{\mathsf{add}}
\newcommand{\del}{\mathsf{del}}
\newcommand{\action}[1]{\Tup{ \pre{#1}, \eff{#1} }}

\newcommand{\invmatr}[1]{\mathbf{#1}^{-1}}
\newcommand{\transmatr}[1]{\mathbf{\Phi}(#1)}
\newcommand{\thereals}{\ensuremath{\mathbb{R}}}
\newcommand{\thenaturals}{\ensuremath{\mathbb{N}}}
\newcommand{\cmark}{\checkmark}
\newcommand{\xmark}{\mathsf{X}}

\newcommand{\matr}[1]{\ensuremath{\mathbf{#1}}}
\newcommand{\mat}[1]{\ensuremath{\mathbf{#1}}}
\newcommand{\matt}[1]{\ensuremath{\mathbf{#1}^{T}}}
\newcommand{\Useq}{\ensuremath{\mathbf{U}}}
\newcommand{\ubar}[1]{\text{\b{$#1$}}}

%%Domains

\newcommand{\block}[0]{\textsc{Blocks}\xspace}
\newcommand{\mic}[0]{\textsc{Miconic}\xspace}
\newcommand{\driver}[0]{\textsc{Driverlog}\xspace}
\newcommand{\sate}[0]{\textsc{Satellite}\xspace}
\newcommand{\counters}[0]{\textsc{Counters}\xspace}
\newcommand{\blockgrouping}[0]{\textsc{Blocks-grouping}\xspace}
\newcommand{\sailing}[0]{\textsc{Sailing}\xspace}
\newcommand{\drone}[0]{\textsc{Drone}\xspace}
\newcommand{\procurement}[0]{\textsc{Procurement}\xspace}
\newcommand{\cashpoint}[0]{\textsc{Cashpoint}\xspace}
\newcommand{\terraria}[0]{\textsc{Terraria}\xspace}

%%Comments

\section{Introduction}

In numeric planning, states include numeric variables, which are updated by actions through arithmetic effects \cite{FoxL03}. This extension of classical planning enables quantitative information to directly influence the evolution of the system, representing phenomena such as resource consumption, cumulative change, or behavior governed by numeric computations.

Support for numeric state variables is insufficient for many problems in which the control is subject to numerically parameterized actions. Consider, for example, a \texttt{turn-right} action with a certain degree of rotation or a \texttt{pour-water} action involving an arbitrary number of liters. Research conducted in this line focuses on extending numeric planning to include \textit{control parameters}; i.e., action parameters that extend over infinite numeric domains. %, taking either discrete or continuous values. 
Different approaches have explored this idea from complementary perspectives. The TM-LPSAT planner \cite{ShinD05} integrates the control parameters into a hybrid SAT and linear programming framework. POPCORN \cite{SavasFLM16} and NextFLAP \cite{nextflap24}, in contrast, embed control parameters within a forward partial-order planning search. More recently, the approach of \cite{heesch24} delegates the selection of control parameter values to a neural model.

%The aforementioned works treat control parameters as constraints that narrow the search space; and, due to the infinite nature of the parameters, these approaches rule out modeling them as decision variables. \textcolor{magenta}{
The aforementioned works treat control parameters as constraints that narrow the search space, ruling out the modeling of these parameters as decision variables, as this would lead to an infinite action space.
For example, a robot may be allowed to turn right through an angle between 20° and 45°, yet the consequences of each specific value can differ significantly. 
The S-BFS approach \cite{aso-mollar2025ICAPS}, however, introduces sampling into a forward state-space search algorithm to explicitly handle control parameters during planning. 

% DPEX studies the problem of numeric planning with \textit{control variables}, a reformulation of actions with infinite domain numeric parameters. It was introduced as a principled framework for reasoning in such settings under full state information, thereby enabling the use of heuristic functions as estimators. DPEX provides a systematic way of searching when there are infinitely many action instantiations, using the concept of Delayed Partial EXpansion, that is, using a sampling function to iteratively generate subsets of successors. It has been shown to be competitive with respect to the state of the art even with simple heuristics \cite{aso-mollar2025IJCAI}. The core difficulty of this approach, however, lies in the lack of strong and suitable heuristic estimators for problems with control variables, since standard numeric heuristics \cite{Scala2016IntervalBasedRF, subgoaling20} cannot be utilized in this setting. This is due to the fact that control variables give rise to infinite actions, one for each numeric parameter instantiation.

S-BFS studies the problem of numeric planning with \textit{control variables}, a reformulation of actions with infinite domain numeric parameters. It is a principled framework for reasoning in such settings under full state information, which allows for the use of heuristic functions as estimators. S-BFS provides a systematic way of searching when there are infinitely many action instantiations, using a sampling function to iteratively generate subsets of successors. Although it shows to be competitive with respect to SOTA methods, even when simple heuristics are used \cite{aso-mollar2025IJCAI}, the core difficulty with this approach lies in that standard, informative numeric heuristics \cite{Scala2016IntervalBasedRF, subgoaling20} cannot be used in infinite action spaces.

In this paper, we identify a tractable fragment of numeric planning with control variables and introduce an optimistic compilation that transforms such problems into simple numeric tasks, a common approach in the literature of numeric planning \cite{scala18} or HTN planning \cite{htncompilation} for defining novel estimators. To do so, we abstract control-dependent expressions and convert them into bounded constant effects and relaxed preconditions. We prove that the resulting compiled problems are safe pruning under the subgoaling relaxation. This enables existing numeric subgoaling heuristics to be used directly to estimate the distance to the goal in the original infinite tasks. Our results show that this compilation yields an effective and computationally feasible mechanism for applying traditional numeric heuristics in problems with infinite actions. 

\section{Background}

In this section, we summarize the control variables formalism of S-BFS, its search scheme and the subgoaling relaxation for numeric planning, on which the heuristic estimators defined by our approach are based.

%Firstly, we provide a brief overview of the planning setting in which our work is framed. We first revisit the formulation of numeric planning with control variables \textcolor{magenta}{you assume the framework of control variables as a standard approach ... reference to DPEX, probably better to say we revisit the control variable approach of DPEX ... if you say 'control variables' is DPEX}, which extends classical numeric planning by incorporating actions whose parameters may range over infinite domains. We then recall the DPEX search scheme, a forward state-space algorithm designed to cope with the resulting infinite branching in a systematic way, leveraging heuristic information \textcolor{magenta}{??? DPEX leverages heuristic information?? or is the approach presented here?? This is background ...}. Finally, we summarise the subgoaling relaxation for numeric planning, on which the heuristic estimators defined by our approach are based.

\subsection{Numeric planning with control variables}

We adopt the numeric planning with control variables formalization by \citet{aso-mollar2025IJCAI} and adapt it to our needs. In this setting, $F$ is a set of propositional state variables, $X$, a set of numeric state variables, and $U$ a set of numeric control variables. We assume for every variable $v \in X \cup U$ that a valuation function maps $v$  to its domain $Dom(v)$, and that a numeric domain is a subset of the rational numbers; that is, $Dom(v)\subseteq\mathbb{Q}$ for every $v\in X\cup U$. We denote the set of arithmetic expressions over $X$ and $U$ as Expr$(X\cup U)$. First, we define controllable numeric conditions as  usual numeric planning conditions that also involve control variables.
\begin{definition}[Controllable numeric condition]
    A \textbf{controllable numeric condition} is an inequality $(\xi \bowtie 0)$, where $\xi \in \textup{Expr}(X\cup U)$ and $\bowtie \in \{<, \leq, \doteq, \geq, >\}$, combined with logical operators. We denote the set of all controllable numeric conditions over $X$ and $U$ as $\textup{Constr}_\mathbb{Q}(X \cup U)$. 
\end{definition}

Controllable numeric assignments are defined as numeric assignments that involve control variables in the right part of the assignment.

\begin{definition}[Controllable numeric assignments]
    A \textbf{controllable numeric assignment} is an atomic update of the form $(x := \xi)$, where $x \in X$ and $\xi \in \textup{Expr}(X\cup U)$. We denote assignments of the form $(x := x + \xi)$ as $(x \pluseq \xi)$. We also denote the set of all controllable numeric assignments from $X\cup U$ to $X$  as $\textup{Assign}_\mathbb{Q}(X, U)$. A set of numeric assignments $\mathcal{A} \subseteq \textup{Assign}_\mathbb{Q}(X, U)$ is \textbf{consistent} if, for every $x \in X$, there is at most one assignment $(x := \xi)$ in $\mathcal{A}$, as defined for propositional assignments.
\end{definition}

Additionally, we denote as $\textup{Constr}_\mathbb{B}(F)$ and $\textup{Assign}_\mathbb{B}(F)$ the sets of propositional conditions and assignments, respectively.

\begin{definition}[Numeric planning problem with control variables]
A \textbf{numeric planning problem with control variables} is a tuple $\mathcal{P} = (F, X, U, A, s_0, G)$, where:
    \begin{itemize}
        \item $A$ is a finite set of actions $a=(\textup{Pre}(a),\textup{Eff}(a))$, where $\textup{Pre}(a)=\langle\textup{Pre}_\mathbb{B}(a),\textup{Pre}_\mathbb{Q}(a)\rangle$ and $\textup{Eff}(a)=\langle\textup{Eff}_\mathbb{B}(a),\textup{Eff}_\mathbb{Q}(a)\rangle$, such that
        \begin{enumerate}
        \item $\textup{Pre}_\mathbb{B}(a) \in \textup{Constr}_\mathbb{B}(F)$ and $\textup{Pre}_\mathbb{Q}(a) \in \textup{Constr}_\mathbb{Q}(X\cup U)$ are sets of conditions;
        \item $\textup{Eff}_\mathbb{B}(a) \subseteq \textup{Assign}_\mathbb{B}(F)$ and $\textup{Eff}_\mathbb{Q}(a) \subseteq \textup{Assign}_\mathbb{Q}(X,U)$ are consistent sets of assignments;
        \end{enumerate}
        \item $s_0$ is the initial state with valuations over $F$ and $X$;
        \item $G = \langle G_\mathbb{B}, G_\mathbb{Q} \rangle$, where $G_\mathbb{B} \in \textup{Constr}_\mathbb{B}(F)$ and $G_\mathbb{Q} \in \textup{Constr}_\mathbb{Q}(X)$, are the goal conditions. 
    
    \end{itemize}
\end{definition}

A \textbf{plan} in this setting, $\pi=(\langle a_i,\mu_i\rangle)_{i=1}^k$, is a sequence of pairs consisting of an action and a control valuation. A plan is \textbf{valid} if there exists a sequence of states $(s_i)_{i=1}^{k}$ such that $s_i \models subs_{\mu_i}(\text{Pre}(a_i))$, where $subs_{\mu_i}$ denotes the substitution of control variables in numeric expressions according to valuation $\mu_i$. A plan is a \textbf{solution} if it leads to a goal state, i.e., if $s_k \models G$. %Recall that a \textbf{numeric planning problem} is a numeric planning problem with control variables such that $U=\emptyset$; thus, we use the same notation as in this section for numeric planning problems. \textcolor{magenta}{??????????? what are you trying to say here?? you use the same notation where??? -> lo pongo en subgoaling}

\subsection{Searching with Delayed Partial EXpansions}

Hereinafter, we approach the problem of searching for a valid plan using the search scheme of S-BFS, which is based on Delayed Partial Expansions (DPEX) of search nodes. To cope with the infinite branching factor, DPEX follows a BFS-like scheme that relies on a sampling function $\phi$ to iteratively generate finite subsets of successors, referred to as partial expansions, and on a rectification function $r_h$ to update the parent's $f$-value after each partial expansion. The rectification function $r_h$ can be seen as an abstraction of a heuristic function $h$ that adjusts the evaluation of $h$ accordingly to the number of partial expansions performed in a given state.
%combines best-first search interleaved with sampling and rectification \textcolor{magenta}{you have to give a descriptive definition first, say briefly what is sampled and what rectification is ... better to give first an intuitive and descriptive exaplanation}. 
Algorithm~\ref{alg:algorithm} describes how DPEX works. 

\begin{algorithm}[h!]
\caption{DPEX$_{\phi,r_h}$}
\label{alg:algorithm}
\begin{algorithmic}[1]
\State \textbf{Input:} Sampling function $\phi$, rectification function $r_h$, num. samples $K$, initial state $s_0$, goal conditions $G$
\State \textbf{Output:} Goal reached
\State $Open \gets \{(f(s_0), s_0)\}$
\While{$Open \neq \emptyset$}
    \State Extract node $s$ with lowest $f$-value from $Open$
    \If{$s \models G$}
        \State \Return true
    \Else
        \For{$i = 1$ \textbf{to} $K$}
            \State Sample a successor $s'$ using $\phi$
            \State Insert $(f(s'), s')$ into $Open$
        \EndFor
        \If{$s$ \textbf{is not fully expanded}}
            \State Rectify $f(s)$ using $r_h$
            \State Insert $(f(s), s)$ into $Open$
        \EndIf
    \EndIf
\EndWhile
\State \Return false
\end{algorithmic}
\end{algorithm}

The algorithm maintains a frontier of states ordered by an evaluation function $f$ (line 3). At each iteration, the state with lowest $f$ is selected (line 5) and $K$ successors  are sampled using a sampling function $\phi$ (lines 9–10) and inserted in the frontier (line 11). If the selected node is not fully expanded, its $f$-value is updated via a rectification function $r_h$ (lines 12–13) and reinserted (line 14). Through this interplay of sampling and rectification, DPEX can handle infinite branching factors efficiently while remaining probabilistically complete under mild assumptions on $\phi$ and $r_h$. It becomes critical to have a heuristic function $h$ that accurately estimates the cost to the goal. Hereinafter, we will attempt to address this issue by leveraging the subgoaling relaxation.

\begin{figure*}[ht]
    \centering
    \includegraphics[width=0.7\linewidth]{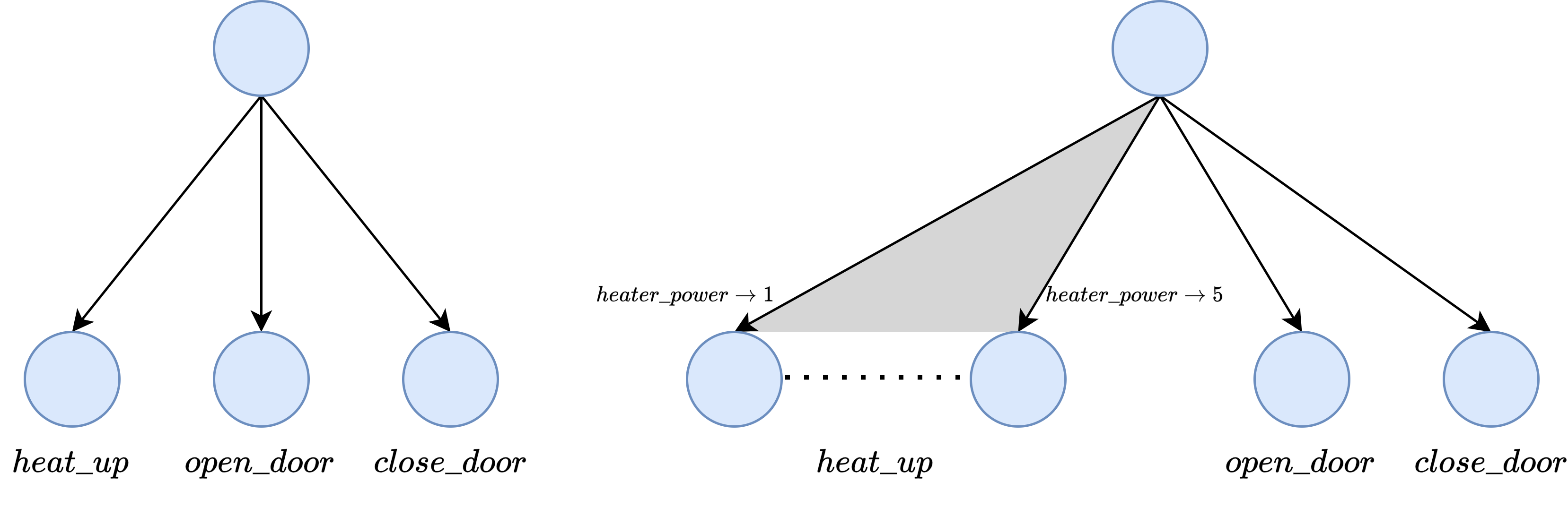}
    \caption{The inclusion of control variables introduces a shift in the decision space of numeric planning problems.}
    \label{fig:decisionspace}
\end{figure*}

\subsection{Subgoaling relaxation for numeric planning}

A \textbf{numeric planning problem} is a numeric planning problem with control variables such that $U=\emptyset$. For numeric planning problems, the subgoaling relaxation extends its classical counterpart \cite{bonet01} by taking advantage of the numeric structure of the problem. It relies on the notion of the m-times regressor \cite{subgoaling20}, an operator that rewrites a condition to identify the actions that can contribute positively to achieving it. Given a state $s$, an action is considered a \textbf{possible achiever} of a condition if $s$ satisfies the $m$-regressed version of that condition for some $m\in \mathbb{N}$.

The subgoaling relaxation is particularly effective for simple numeric planning problems, that is, problems in which all conditions are linear and all assignments follow the form $x \pluseq k$ for some constant $k$. In these cases, the m-times regressor can be computed in closed form. This relaxation allows the derivation of both admissible and inadmissible heuristic estimates, following the principles of the classical $h_{max}$ and $h_{add}$ heuristics, respectively. Such heuristics can be computed by using just possible achievers.  

Let $\psi$ be a condition and $\psi^{r(a,m)}$ the m-times regressor of $\psi$ through $a$; the $add$ heuristic is defined as follows

\;

\noindent$
\hat{h}^{add}_{hbd+}(s, \psi) = \\
\begin{cases}
0 
    & \text{if } s \models \psi \\[6pt]

\displaystyle \min_{a \in ach(\psi)} 
\left( \hat{h}^{add}_{hbd+}(s, \mathrm{Pre}_\mathbb{B}(a)) + \lambda(a) \right)
    & \psi \in \mathrm{PCs} \\[10pt]

\displaystyle \min_{\substack{a \in A,\\ s \models \psi^{r(a,\hat{m})},\\ \forall \hat{m}\in \mathbb{Q}^+}}
\left( \hat{m}\,\lambda(a) + \hat{h}^{add}_{hbd+}(s, \mathrm{Pre}_\mathbb{Q}(a)) \right)
    & \psi \in \mathrm{SCs} \\[12pt]

\displaystyle \sum_{\substack{c \subset \psi \\ |c| = 1}}
\hat{h}^{add}_{hbd+}(s, c)
    & |\psi| > 1
\end{cases}
$

\;

Where $\lambda(a)$ denotes the cost of action $a$, PC is the set of propositional conditions and SC is the set of simple numeric conditions. $\hat{m}$ represents the continuous relaxation of the number of repetitions, used for computational complexity reasons \cite{subgoaling20}. 

Let $\P$ be a numeric planning problem; we denote its subgoaling relaxation by $\P^1$. We are particularly interested in proper \textbf{relaxations} that guarantee an over-approximation of the problem; i.e., if the original problem has at least one solution, so does its relaxation. This property proves very beneficial in the heuristic search, as it can be used for safely pruning and tends to provide more reliable estimates. 

\section{Controllable simple numeric problems}

Extending numeric planning problems with control variables makes the decision space of each state potentially infinite, since now a single action with a control variable gives rise to infinitely many grounded instantiations, i.e., pairs of action-control valuation. Figure~\ref{fig:decisionspace} illustrates this problem in a node with three applicable actions.

The infinitude of the action space directly impacts the heuristic computation. Recall that, in the subgoaling relaxation, an action is considered a possible achiever of a condition when the regressed condition holds in the current state. With control variables, a single condition may therefore have infinitely many possible achievers, making the explicit computation of the heuristic derived from the relaxation infeasible. 

A common strategy in related work facing similarly prohibitive costs is to compile the original problem into a simpler numeric model that relaxes certain aspects of the dynamics. Heuristic estimates are then computed over this compiled problem, which is more amenable to explicit reasoning and efficiently captures an informative approximation of the original task. This compilation-based approach is recurrent in the literature and can be seen, for instance, in work such as the effect-abstraction based relaxation in \cite{scala18} for numeric planning problems with linear effects or the HTN planning abstraction for guiding search \cite{htncompilation}.

Following this strategy, we seek to identify a subset of numeric planning problems with control variables whose compilation yields simple numeric planning problems. Since, in standard numeric planning \textit{without} control variables, the subgoaling relaxation admits a closed-form computation only for the class of simple numeric planning problems \cite{subgoaling20}, our goal is to characterize an analogous class in the control-variable setting, which we call controllable simple numeric problems, with the aim of enabling the effective use of the subgoaling relaxation in the control-variable setting through a compilation.

%\textcolor{blue}{que quede claro que estoy haciendo una identificación deun subconjunto de problemas}This motivates the need for a compilation that brings numeric planning with control variables into a form where the subgoaling relaxation from numeric planning can be effectively used. For this reason, and given that the subgoaling relaxation is best suited for simple numeric planning problems, we aim to \textcolor{blue}{identify} define a subset of numeric planning problems with control variables such that their compiled version becomes simple numeric.

\begin{definition}[Controllable simple numeric conditions]
    Given a set of controllable numeric assignments $ \mathcal{A} \subset \textup{Assign}_\mathbb{Q}(X,U)$, a numeric condition $(\xi \rhd 0)$ is \textbf{controllable simple} iff \begin{itemize}
        \item $\xi$ is linear w.r.t. $X$, i.e., $\xi= (\xi_X \rhd \xi_U)$, where $\xi_X\in \text{Expr}(X)$ is linear and $\xi_U \in \textup{Expr}(U)$;
        \item $\rhd\in\{>,\geq\}$;
        \item $\forall\ \varphi \in \mathcal{A}$ such that $\varphi = (x := \xi')$, if $x$ appears in $\xi$, $\varphi$ can be rewritten as $(x \pluseq \xi'')$, where $\xi'' \in \textup{Expr}(U)$. 
    \end{itemize}  
\end{definition}

A numeric planning problem with control variables $\P=(F,X,U,A,s_0,G)$ is a \textbf{controllable simple numeric problem}  iff every condition in $\P$ is controllable simple. Recall that the set of all controllable simple numeric problems is a subset of the set of all numeric planning problems with control variables.

\begin{example}
    Consider a controllable simple numeric problem defined as follows:
    \begin{itemize}
        \item Fluents: $F = \emptyset$,
        \item Numeric state variables: $X = \{x, y\}$,
        \item Control variables: $U = \{u_1, u_2\}$,
        \item Initial state: $x = 0$, $y = 0$,
        \item Numeric goal: $x > 20$,
        \item Single action $a$ with:
        \begin{itemize}
            \item Preconditions: $x > u_1$, $y > u_2$,
            \item Effects: $x \pluseq 2u_1 + u_2$, $y \pluseq u_1 - 3u_2$.
        \end{itemize}
    \end{itemize}
    
    This problem is a controllable simple numeric problem because the goal and conditions are controllable simple, and the effects of the action are of the desired form, i.e., additive and linear only with respect to the control variables.
\end{example}

Controllable simple numeric problems are problems such that conditions are linear in the state variables and such that the effects depend only on control variables. Since control variables are \textit{free and bounded} variables that are not part of the state, we can consider them as ``lifted numbers''. From this perspective, concretizing the values of control variables in a controllable simple numeric planning problem yields a simple numeric planning problem. What we need is a concretization that makes the relaxation safe-pruning.

\section{Optimistic compilation}

In this section, we present a relaxation that compiles a controllable simple numeric planning problem into a simple numeric one. We demonstrate that this compilation is safe-pruning under the subgoaling relaxation, which automatically allows the use of numeric subgoaling heuristics for estimate calculations through the compilation. To lay the groundwork, we first briefly review the closed arithmetic of intervals:

\begin{definition}[Closed arithmetic of intervals]
    Given $x=[\underline{x},\overline{x}]$ and $y=[\underline{y},\overline{y}]$, then:
    \begin{itemize}
        \item $x+y = [\underline{x}+\underline{y},\overline{x}+\overline{y}]$
        \item $x-y = [\underline{x}-\overline{y},\overline{x}-\underline{y}]$
        \item $x\cdot y = [\min(\underline{x}\cdot\underline{y},\underline{x}\cdot\overline{y},\overline{x}\cdot\underline{y},\overline{x}\cdot\overline{y}),\max(\underline{x}\cdot\underline{y},\underline{x}\cdot\overline{y},\overline{x}\cdot\underline{y},\overline{x}\cdot\overline{y})]$
    \end{itemize}
\end{definition}

Given an arithmetic expression over $U$, $\xi \in \text{Expr}(U)$, we will denote $Dom(\xi)$ as the interval resultant from the arithmetic operation of the domain interval for every variable. For example, if $Dom(u_1)=[1,2]$ and $Dom(u_2)=[0,3]$, then $Dom(3u_1+u_2)=3\cdot[1,2]+[0,3]=[3,6]+[0,3]=[3,9]$. We will refer to the lower and upper bound of $Dom(\xi)$ as its \textbf{extreme valuations}, i.e., $\underline{Dom(\xi)}$ and $\overline{Dom(\xi)}$.

We define the optimistic compilation for the controllable simple numeric problem as a new problem with every controllable simple condition relaxed to its lower bound, and considering every extreme valuation for every effect of the original actions, that is:

\begin{definition}[Optimistic compilation]
\label{opt-compilation}
    Given a controllable simple numeric planning problem $\mathcal{P} = (F, X, U, A, s_0, G)$, we define the \textbf{optimistic compilation} of $\mathcal{P}$, $\mathcal{P}_O=(F,X,A_O,s_0,G)$, as the numeric planning problem resulting from substituting every expression $\xi\in\text{Const}(X\cup U)$ with its extreme valuations. That is, for each $a \in A$ such that $\text{Eff}_\mathbb{Q}(a)=\{(x_1\pluseq \xi^{x_1}_u),\dots,(x_n\pluseq\xi_u^{x_n})\}$ for some $n\in\mathbb{N}$, $x_1,\dots,x_n\in X$ and $\xi_u^{x_1},\dots,\xi_u^{x_n}\in\text{Expr}(U)$ we define a new set $\Lambda(a)$ such that:
    \begin{itemize}
        \item $\Lambda(a)=\{a_{\lambda_1,\dots,\lambda_n} \mid \lambda_i \in \left\{\underline{Dom(\xi_u^{x_i})},\overline{Dom(\xi_u^{x_i})}\right\}\;$ $\left.\forall i \in \{1,\dots,n\}\right\}$,
        \item $\text{Eff}_\mathbb{Q}(a_{\lambda_1,\dots,\lambda_n})=\{ (x_1\pluseq\lambda_1),\dots,(x_n\pluseq\lambda_n)\}$
        \item For every atomic condition $\psi\equiv(\xi_X \rhd \xi_U)$ appearing in $\text{Pre}_\mathbb{Q}(a)$, it transforms to $\psi_O\equiv(\xi_X \rhd \underline{Dom(\xi_U)})$ in $\text{Pre}_\mathbb{Q}(a_{\lambda_1,\dots,\lambda_n}))\; \forall \lambda_1,\dots,\lambda_n$.
        \item $A_O=\bigcup_{a\in A} \Lambda(a)$
    \end{itemize}

\end{definition}

This compilation produces a finite problem by exhaustively enumerating all optimistic valuations of the control-dependent effects of each action. In parallel, all controllable simple conditions in the preconditions are relaxed to their lower bound, intuitively ensuring that any action that is executable in the original problem under some valuation of the control variables remains executable in the compiled task. The resulting problem then over-approximates the behavior of the original one.

\begin{example}
    Consider Example 1 from the previous section, with control variable domains $Dom(u_1)=[0,4]$ and $Dom(u_2)=[3,5]$. First, we compute the ranges of the linear expressions:
    \[
        Dom(2u_1 + u_2) = [3,13], \quad Dom(u_1 - 3u_2) = [-15,-5].
    \]
    The optimistic compilation generates four actions corresponding to the extreme bounds of these expressions. All actions share the same preconditions, coming from the original preconditions $x > u_1$, $y > u_2$:
    \[
        x > 3, \quad y > -15.
    \]
    The actions and their effects are:

    \begin{center}
    \begin{tabular}{c|c|c}
        Action & Effect on $x$ & Effect on $y$ \\
        \hline
        $a_{3,-15}$ & $x \pluseq 3$ & $y \pluseq -15$ \\
        $a_{13,-15}$ & $x \pluseq 13$ & $y \pluseq -15$ \\
        $a_{3,-5}$ & $x \pluseq 3$ & $y \pluseq -5$ \\
        $a_{13,-5}$ & $x \pluseq 13$ & $y \pluseq -5$ \\
    \end{tabular}
    \end{center}

    Each action represents one combination of extreme values for the expressions involving the control variables.%, allowing the compilation to conservatively approximate the effects of the original action.
\end{example}
The first thing we need to prove is that, in fact, this compilation induces a simple numeric planning problem.

\begin{theorem}
    Given $\mathcal{P} = (F, X, U, A, s_0, G)$ a controllable-simple numeric planning problem, $\mathcal{P}_O$ is a simple numeric planning problem.
\end{theorem}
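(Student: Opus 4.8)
The plan is to verify directly that $\mathcal{P}_O$ satisfies the two defining requirements of a simple numeric planning problem: every condition is linear, and every numeric assignment has the additive form $x \pluseq k$ with $k$ a rational constant. Since the compilation produces the tuple $\mathcal{P}_O = (F, X, A_O, s_0, G)$ with no control-variable component, I would first note that $\mathcal{P}_O$ is already a numeric planning problem in the sense of the background section (it is a numeric planning problem with control variables where $U = \emptyset$); its propositional conditions and effects are carried over unchanged and play no role in the simple-numeric characterization, so it only remains to check the ``simple'' qualifier on its numeric conditions and effects.

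First I would handle the effects. By construction, each action $a_{\lambda_1,\dots,\lambda_n} \in \Lambda(a)$ has $\text{Eff}_\mathbb{Q}(a_{\lambda_1,\dots,\lambda_n}) = \{(x_1 \pluseq \lambda_1), \dots, (x_n \pluseq \lambda_n)\}$ with each $\lambda_i \in \{\underline{Dom(\xi_u^{x_i})}, \overline{Dom(\xi_u^{x_i})}\}$. The key step is to argue that each extreme valuation is a single rational constant rather than an expression. This follows from the closed arithmetic of intervals together with the boundedness of the control domains: since each $u \in U$ ranges over a bounded rational interval and $\xi_u^{x_i} \in \text{Expr}(U)$ is built from the operations covered by that interval arithmetic, the induced interval $Dom(\xi_u^{x_i})$ is a bounded interval with rational endpoints, so $\underline{Dom(\xi_u^{x_i})}$ and $\overline{Dom(\xi_u^{x_i})}$ are well-defined elements of $\mathbb{Q}$. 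Hence every compiled effect is of the form $x_i \pluseq \lambda_i$ with $\lambda_i$ constant, as required.

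Next I would treat the conditions. Because $\mathcal{P}$ is controllable simple, every atomic numeric condition has the shape $(\xi_X \rhd \xi_U)$ with $\xi_X \in \text{Expr}(X)$ linear, $\xi_U \in \text{Expr}(U)$, and $\rhd \in \{>, \geq\}$. The compilation rewrites such a condition as $(\xi_X \rhd \underline{Dom(\xi_U)})$; by the same interval argument $\underline{Dom(\xi_U)}$ is a rational constant, so the rewritten condition is a linear inequality over $X$ alone. Goal conditions and any precondition already free of control variables are, by controllable simplicity, linear in $X$ and are left untouched, so they too are linear; moreover, the logical structure combining atomic conditions is preserved by the compilation. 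Since every atomic condition is linear, all conditions of $\mathcal{P}_O$ are linear, and combining this with the effect analysis yields that $\mathcal{P}_O$ is a simple numeric planning problem.

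I expect the main obstacle to be the justification that the extreme valuations collapse to bona fide rational constants, i.e., that $Dom(\xi_U)$ is a bounded interval with rational endpoints. This hinges on the assumption that the control variables have bounded rational domains and that the relevant expressions over $U$ fall within the operations closed under the preceding interval arithmetic; I would make this assumption explicit and, if needed, restrict $\text{Expr}(U)$ accordingly so that the bounds are guaranteed to exist and be rational. The remaining steps are essentially bookkeeping over the syntactic form of conditions and effects.
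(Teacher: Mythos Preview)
Your proposal is correct and follows essentially the same approach as the paper's own proof: both argue that, because $\mathcal{P}$ is controllable simple, the control-variable expressions in conditions and effects collapse to constants under the compilation, leaving linear conditions and additive constant effects. Your treatment is considerably more detailed---in particular, you make explicit the justification that the extreme valuations are rational constants via closure of interval arithmetic---whereas the paper dismisses the result as trivial in two sentences.
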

\begin{proof}
    This holds trivially since control variables only appear in actions, and since $\mathcal{P}$ is controllable-simple, conditions are linear with respect to $X$, so in $\mathcal{P}_O$ conditions are linear given that control variables' expressions become constants. Effects also become additive with respect to a constant for the same reason.
\end{proof}

After proving that the compiled problem is in fact simple numeric, we need to prove that the optimistic compilation is safe pruning under the subgoaling relaxation. This will enable an effective use of the subgoaling relaxation as dead-end detector and as heuristic estimator.

\begin{theorem}
\label{thm:relaxation-bruteforce}
    Given $\mathcal{P} = (F, X, U, A, s_0, G)$ a controllable-simple numeric planning problem, let $\Pi$ be the set of solutions for $\P$. Then $\Pi_\mathcal{P} \neq \emptyset \implies \Pi_{\mathcal{P}_O^1} \neq \emptyset$, i.e., the compilation is safe pruning under the subgoaling relaxation.
\end{theorem}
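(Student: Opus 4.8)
The plan is to factor the implication as
$\Pi_{\P}\neq\emptyset \implies \Pi_{\P_O}\neq\emptyset \implies \Pi_{\P_O^1}\neq\emptyset$,
where the second arrow is exactly the over-approximation guarantee of the subgoaling relaxation recalled in the Background (the relaxation of a solvable simple numeric problem is solvable), applicable because $\P_O$ is simple numeric by the preceding theorem. Hence the real content lies in the first arrow: every solution of the control-variable task induces a solution of its optimistic compilation. Two features of controllable-simple problems are what I would lean on throughout. First, after compilation every condition (preconditions and goal) is a lower bound $\xi_X \rhd \underline{Dom(\xi_U)}$ with $\rhd\in\{>,\geq\}$. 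Second, every variable occurring in a condition is touched only by additive, control-only increments $x\pluseq\xi''_u$, so the effect on such a ``monitored'' variable is state-independent and its value $\xi''_u(\mu)$ always lies in the interval enclosure $[\underline{Dom(\xi''_u)},\overline{Dom(\xi''_u)}]$.

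Given a solution $\pi=(\langle a_i,\mu_i\rangle)_{i=1}^k$ with trajectory $s_0,\dots,s_k$, I would build the compiled plan step by step and maintain two invariants. Precondition soundness is immediate: if $s_{i-1}\models(\xi_X\rhd\xi_U)$ under $\mu_i$ then $\xi_X(s_{i-1})\rhd\xi_U(\mu_i)\geq\underline{Dom(\xi_U)}$, so the relaxed lower-bound precondition of every action in $\Lambda(a_i)$ holds in the same state, and relaxing preconditions can only enlarge applicability. Effect domination is the key algebraic fact: for a linear condition $\xi_X=\sum_x \nabla_x\,x$, choosing the corner $\lambda_x=\overline{Dom(\xi^x_u)}$ when $\nabla_x>0$ and $\lambda_x=\underline{Dom(\xi^x_u)}$ when $\nabla_x<0$ yields $\sum_x\nabla_x\lambda_x\geq\sum_x\nabla_x\,\xi^x_u(\mu_i)$; that is, an extreme valuation available in $\Lambda(a_i)$ contributes at least as much to $\xi_X$ as the original increment. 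Consequently, for any lower-bound condition the compiled corners can match or exceed the progress made by $\pi$, and whenever $\pi$ moves a condition from violated to satisfied there is a corner with strictly positive contribution, so repeated application drives $\xi_X$ past any threshold.

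The main obstacle is that these per-condition optimal corners need not agree: a single compiled action commits to one element of $\Lambda(a_i)$, and a variable with a positive coefficient in one condition and a negative one in another would demand opposite extremes; moreover, integer application counts cannot reproduce the real-valued increments $\xi^x_u(\mu_i)$, so a naive step-for-step replacement need not yield a plain plan in $\P_O$ (this is precisely why the statement targets $\P_O^1$ rather than $\P_O$). I would discharge this by exploiting that the subgoaling relaxation achieves each atomic condition independently and admits unbounded repetition (indeed continuous $\hat m$) of a single achiever: each goal conjunct $\psi_j$ is reached on its own best corner $a_{c^*_j}$, applied enough times to overshoot its lower bound, with the sign/coupling conflicts across conjuncts simply ignored by the relaxation. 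The remaining point is to discharge the relaxed preconditions of these achievers; here I would induct along the achievement order supplied by $\pi$ itself, showing that the relaxed reachable set dominates $s_{i-1}$ on every condition before $a_i$ is needed, so that the recursion in $\hat{h}^{add}_{hbd+}$ bottoms out. Since the resulting relaxed endpoint dominates $s_k$ on every condition and $s_k\models G$, it satisfies $G$, giving $\Pi_{\P_O^1}\neq\emptyset$.
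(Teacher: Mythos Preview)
Your substantive argument---induction along the achievement order of $\pi$, the precondition-relaxation inequality $\xi_X(s_{i-1})\rhd\xi_U(\mu_i)\geq\underline{Dom(\xi_U)}$, the sign-based corner choice that dominates the net effect of $\langle a_i,\mu_i\rangle$ on each condition, and the final appeal to the subgoaling decomposition---is exactly the paper's proof. Both arguments exhibit, for every condition made true by $\pi$, a compiled action in $\Lambda(a_i)$ that is a possible achiever in $\P_O^1$, and both close by noting that the relaxation treats atomic conditions independently so the per-condition corner conflicts are irrelevant.

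The one issue is your framing. The factorisation $\Pi_{\P}\neq\emptyset \Rightarrow \Pi_{\P_O}\neq\emptyset \Rightarrow \Pi_{\P_O^1}\neq\emptyset$ is not a valid plan: the first arrow is false in general. Take $X=\{x\}$, $U=\{u\}$ with $Dom(u)=[0,1]$, a single action with effect $x\pluseq u$, initial state $x=0$, and goal $(x-\tfrac12\geq 0)\wedge(\tfrac12-x\geq 0)$. The original task is solved by $u=\tfrac12$, yet $\P_O$ has only the increments $0$ and $1$ and cannot hit $x=\tfrac12$, so $\Pi_{\P_O}=\emptyset$. You do notice mid-proof that ``a naive step-for-step replacement need not yield a plain plan in $\P_O$'' and pivot to arguing directly in $\P_O^1$; that pivot is correct and is what the paper does from the outset. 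Drop the factorisation sentence and go straight to the inductive reachability argument in the subgoaling relaxation, and your write-up coincides with the paper's.
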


\begin{proof}
    If $\Pi_\mathcal{P} \neq \emptyset$, let $\pi\in\Pi_\mathcal{P}$ be a solution for $\mathcal{P}$. We proceed by induction over $\pi$.

    \paragraph{Base case.} A condition reached by an empty plan is trivially satisfied in the initial state $s_0$, which is the same in both compilations.

    \paragraph{Inductive step.} A condition reached by $\pi$ that is not true in the initial state implies the existence of some pair $\langle a_i,\mu_i\rangle$ in $\pi$ making $\psi$ true from the state $s_{i-1}$ reached by the prefix of $\pi$ up to $\langle a_i,\mu_i\rangle$. We want to discover a possible achiever in the relaxation that works towards $\psi$. Let us define $\psi$ as an abstract condition $\psi\equiv \sum_{x_i}w_{x_i}\cdot x_i\;  \rhd\; \xi_u$, where $\xi_u\in\text{Expr}(U)$. But $ \sum_{x_i \in X}w_{x_i}\cdot x_i \rhd \xi_u \geq \underline{Dom(\xi_u)}$, and thus the set of states that fulfill the compiled condition in $\mathcal{P}_O$ is always greater than those in $\mathcal{P}$. In particular, since $s_{i-1}\models_{\mu_i} \text{Pre}(a_i)$, by hypothesis of induction, then it follows that $s_{i-1} \models \text{Pre}({a_i}_{\lambda_1,\dots,\lambda_n})\; \forall \lambda_1,\dots,\lambda_n$. 
    
    Now, it is just a matter of choosing the right lambdas in order for the effects to work towards fulfilling $\psi$. This can be computed by calculating the net effect $N_{\psi,a}$ of each action ${a_i}_{\lambda_1,\dots,\lambda_n}$ and choosing the right extreme values for it to be positive. For every $e \in \text{Eff}(a)$, the net effect before the compilation, or the contribution of the action towards fulfilling the condition, can be computed as a trivial generalization of the net effect for standard numeric planning \cite{subgoaling20}:    $$N_{\psi,a}=\sum_{\substack{x_i\in lhs(e) \\ e \in \text{Eff(a)}}}w_{x_i}\cdot \xi_u^{x_i´}$$
    But $N_{\psi,a}$ in this setting is an expression that depends on control variables; concretely, its domain can be calculated as an interval regarding the close arithmetic:
$$Dom(N_{\psi,a})=\sum_{\substack{x_i\in lhs(e) \\ e \in \text{Eff(a)}}}w_{x_i}\cdot Dom(\xi_u^{x_i´})$$
    If the domain of the net effect is greater than zero, then the action is a possible achiever for the condition because it has a positive contribution on the condition. $Dom(N_{\psi,a})$ is an interval that \textbf{we actually know has a positive side}, because of the inductive hypothesis, since there exists some control valuation $\mu_i$ that actually makes $\psi$ fulfillable, which also means that the substitution of the expression using the control valuation, $subs_\mu(N_{\psi,a})$, follows $subs_\mu(N_{\psi,a})\in Dom(N_{\psi,a})$ and $subs_\mu(N_{\psi,a})>0$. 
    
    We choose lambdas taking into account the sign of each $w_{x_i}$. If $w_{x_i}>0$, we define $\lambda_i:=\overline{Dom(\xi_u^{x_i})}$, and if $w_{x_i}<0$, we define $\lambda_i:=\underline{Dom(\xi_u^{x_i})}$. With this choice of lambdas, we know for certain that $a_{\lambda_1,\dots,\lambda_n}$ is a possible achiever of $\psi_O$. This is because the choice of lambdas also implies a valuation $\mu_O$ such that $subs_{\mu_O}(N_{\psi,a})>=subs_\mu(N_{\psi,a})$, because we are taking the extreme values for the expressions, and we know that the right side of that inequality is greater than zero by induction.

    Finally, observe that the subgoaling relaxation considers reachability of conditions separately, and thus, since all conditions are reachable, so is any conjunction of them, q.e.d.
\end{proof}

Although the optimistic compilation constitutes a safe-pruning relaxation, its main drawback is that the number of compiled actions grows exponentially with the number of effects in each action. The remainder of this work focuses on mitigating this overhead.

\section{Reducing the complexity of the optimistic compilation}

The exponential nature of the optimistic compilation makes it impractical to use. Nevertheless, if we follow the proof of Theorem 2, it is actually possible to reduce the set of optimistic actions involved. The objective of this section is thus to reduce the size of the compilation by looking and theoretical properties that can be inferred from the proof of the last theorem. Let us first define what we call the sign choice function:

\begin{definition}[Sign choice function]
    Let $\mathcal{P}=(F,X,U,A,s_0,G)$ be a controllable simple numeric planning problem and let $a\in A$. Let $\text{Eff}_\mathbb{Q}(a)=\{e_1,\dots,e_m\}$ and for each effect $e_j$ let $lhs(e_j)\in X$ be the state variable it updates and $\xi_u^{lhs(e_j)}\in\text{Expr}(U)$ the control-only expression contributing to that update.
    For a given atomic condition $\psi\equiv\sum_{x\in X} w_x x \ \rhd\ \xi_u^\psi$, we define the \textbf{sign choice} function of a given effect as
    $$
    \chi_{\psi,a}(e_j)=\begin{cases}
        \overline{Dom(\xi_u^{lhs(e_j)})}, & \text{if } w_{lhs(e_j)}>0 \\
        \underline{Dom(\xi_u^{lhs(e_j)})}, & \text{if } w_{lhs(e_j)}<0 \\
        0, & \text{if } w_{lhs(e_j)}=0
    \end{cases}
    $$
    The \textbf{signature} of action $a$ relative to a condition $\psi$ is a vector of sign choices $\sigma_{\psi,a}$ such that:
    $$\sigma_{\psi,a}=(\chi_{\psi,a}(e_1),\dots,\chi_{\psi,a}(e_m))$$
\end{definition}

The signature of an action only uses the variants of the compilation that have a positive net effect over the condition, i.e., that are possible achievers.

\begin{definition}[Signature compilation]
    Let $\mathcal{P}$ be a controllable simple numeric planning problem. Let $\Sigma_a:=\{\sigma_{\psi,a} \mid \psi \in \Psi_a\}$, where $\Psi_a$ is the set of all relevant conditions for $a$, i.e., every condition that contain $lhs(e_j)$ for some $e_j \in \text{Eff}_\mathbb{Q}(a)$. We define the \textbf{signature compilation} of $\mathcal{P}$, $\mathcal{P}_\Sigma$, as the optimistic compilation with a reduced set of actions:
    $$A_\Sigma(a):=\{a_\sigma \mid \sigma \in \Sigma_a\},\quad \forall a \in A$$
    Defining $A_\Sigma = \bigcup_{a\in A}A_\Sigma(a)$, then $\mathcal{P}_\Sigma = (F,X,A_\Sigma,s_0,G)$.
\end{definition}

We can further reduce each set $\Sigma_a$ by collapsing signatures to their non-zero components, that is, by removing effect entries that do not participate in any relevant condition. This allows multiple signatures to be merged when they differ only in components that are irrelevant for all conditions under consideration. For example, the set $\{(0,4),(3,7)\}$ is equivalent to $\{(3,4),(3,7)\}$, and the set $\{(0,1),(5,0)\}$ is equivalent to $\{(5,1)\}$ after collapsing the zero entry. This reduced compilation can also be used to detect non-solvability of the problem, since it exposes actions whose effects do not contribute positively to any goal-relevant condition. The resulting action set $A_\Sigma$ is still a subset of $A_O$ and preserves the relaxation property established in the previous theorem.

\begin{prop}
    Given a controllable-simple numeric planning problem $\mathcal{P}$, then $\mathcal{P}_\Sigma$ is a relaxation under the subgoaling relaxation. In other words, replacing $A_O$ by the reduced set $A_\Sigma$ preserves the relaxation property.
\end{prop}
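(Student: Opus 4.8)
The plan is to piggyback on Theorem~\ref{thm:relaxation-bruteforce} rather than redo an induction from scratch, exploiting the fact that its proof is fully constructive. For every solution $\pi\in\Pi_\mathcal{P}$ and every atomic condition $\psi$ that $\pi$ achieves through an action $a$, that proof exhibits one particular compiled action $a_{\lambda_1,\dots,\lambda_n}\in A_O$ as a possible achiever of $\psi_O$, with each coordinate $\lambda_i$ fixed by the sign of the weight $w_{lhs(e_i)}$ of the updated variable in $\psi$. First I would observe that this selection rule is literally the sign choice function: it returns $\overline{Dom(\xi_u^{lhs(e_i)})}$ when $w_{lhs(e_i)}>0$ and $\underline{Dom(\xi_u^{lhs(e_i)})}$ when $w_{lhs(e_i)}<0$, i.e.\ $\lambda_i=\chi_{\psi,a}(e_i)$. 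Hence the achiever built in Theorem~\ref{thm:relaxation-bruteforce} is exactly the signature action $a_{\sigma_{\psi,a}}$.

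Next I would check that $A_\Sigma$ retains every such achiever. An action $a$ can only be a possible achiever of $\psi$ when its net effect is nonzero, which forces $a$ to update some variable occurring in $\psi$; this is precisely the condition defining membership of $\psi$ in $\Psi_a$. Thus for each relevant pair $(a,\psi)$ used along $\pi$ we have $\sigma_{\psi,a}\in\Sigma_a$ and therefore $a_{\sigma_{\psi,a}}\in A_\Sigma(a)\subseteq A_\Sigma$. Replaying the induction of Theorem~\ref{thm:relaxation-bruteforce} unchanged, but drawing each achiever from $A_\Sigma$ instead of $A_O$, then yields $\Pi_{\mathcal{P}_\Sigma^1}\neq\emptyset$, which is the claim.

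The remaining work is to certify that collapsing signatures to their nonzero components preserves every achiever. The key fact is that a zero entry $\chi_{\psi,a}(e_j)=0$ occurs exactly when $w_{lhs(e_j)}=0$, so effect $e_j$ contributes the term $w_{lhs(e_j)}\cdot Dom(\xi_u^{lhs(e_j)})=0$ to $Dom(N_{\psi,a})$. The value placed in that coordinate is therefore immaterial to whether $a_\sigma$ achieves $\psi$: I would fill each free (zero) slot with either extreme value, recovering a genuine member of $A_O$ whose achiever status for $\psi$ is unchanged. Since collapsing only merges signatures that agree on all coordinates that are nonzero for the conditions they serve, filling the free slots with a common value produces a single action that remains a valid achiever of all those conditions at once.

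The hard part will be this last compatibility check: when several signatures are merged into one action, one must ensure that the values chosen for the free slots of one condition do not spoil the merged action as an achiever of another condition. The resolution is exactly the observation above--- each condition constrains only the coordinates on which its weight is nonzero, and collapsing leaves those coordinates untouched--- so all merges are compatible and the relaxation property established in Theorem~\ref{thm:relaxation-bruteforce} carries over to $\mathcal{P}_\Sigma$ intact.
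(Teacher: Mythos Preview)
Your proposal is correct and takes essentially the same approach as the paper: both argue that $A_\Sigma\subseteq A_O$ retains every possible achiever needed by the subgoaling relaxation, so no relaxed plan is lost when passing from $\mathcal{P}_O$ to $\mathcal{P}_\Sigma$. Your version is in fact more explicit than the paper's short proof, since you tie the retained achievers directly to the constructive choice of $\lambda_i$ in Theorem~\ref{thm:relaxation-bruteforce} and you handle the zero-entry collapsing step, which the paper's argument leaves implicit.
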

\begin{proof}
    In the compilation to $\mathcal{P}_\Sigma$, we construct the reduced set $A_\Sigma \subseteq A_O$ by selecting only those instantiations that can act as achievers of at least one goal or subgoal. That is, for every subgoal $g$, if an action $a \in A_O$ can achieve $g$ under some instantiation of its control variables, the corresponding instantiation is included in $A_\Sigma$. Conversely, any action that cannot contribute to achieving any subgoal is discarded.

    Since $A_\Sigma$ is a subset of $A_O$ that preserves all possible achievers for every subgoal, the subgoaling relaxation remains valid: any relaxed plan that exists in $\mathcal{P}$ can also be constructed in $\mathcal{P}_\Sigma$. No new constraints are introduced, and no potential achievers are removed from consideration for the purpose of the relaxation.     Therefore, $\mathcal{P}_\Sigma$ maintains the relaxation property under the subgoaling relaxation, as required.
\end{proof}

\begin{example}
    Continuing the previous example, recall that $a$ is the only action schema with effects
    $e_1: x\pluseq 2u_1+u_2$ and $e_2: y\pluseq u_1-3u_2$, and compiled preconditions $x>3$ and $y>-15$. The conditions relevant to $a$ (that is, those involving a variable appearing in $lhs(e_j)$) are the goal $x>20$ and the numeric preconditions of the action $y>-15$ and $x>3$. We compute the signatures with respect to each of these conditions:

    \begin{itemize}
        \item For $\psi_1\equiv x>20$ and $x>3$ we have coefficients $w_x=1$ and $w_y=0$, and then we only need the upper bound for $x$.

        \item For $\psi_2\equiv y>-15$ we have $w_x=0$ and $w_y=1$, and then we only need the upper bound for $y$.
    \end{itemize}

    The signature set relevant to action $a$ is $\Sigma_a=\{(13,0),(0,-5)\}$, which collapses into $\Sigma_a=\{(13,-5)\}$.
    Hence, the only relevant action is the one with the upper bounds, since $w_x$ and $w_y$ is greater or equal than zero for every precondition. According to the definition of the \emph{signature compilation}, we only need to generate the action $a_{13,-5}$. In this case, the compiled problem has the same amount of actions as the original.

\end{example}

The size of the signature compilation can be proven to be linear with respect to the number of conditions of the problem, i.e., linear with respect to the size of the problem.

\begin{prop}[Size bound of the signature compilation]
    Let $\mathcal{P}$ be a controllable-simple numeric planning problem, and let $\mathcal{P}_\Sigma=(F,X,A_\Sigma,s_0,G)$ be its signature compilation. For each action $a$ from $\mathcal{P}$, let $\Psi_a$ be the set of all relevant atomic conditions for $a$, i.e., every condition that contain $lhs(e_j)$ for some $e_j \in \text{Eff}_\mathbb{Q}(a)$. Let the set of signatures of $a$ be $\Sigma_a := \{\sigma_{\psi,a} \mid \psi \in \Psi_a\}$ and 
    % define the average signature cardinality as
    % $$
    %     \bar{s} := \frac{1}{|A|} \sum_{a\in A} |\Sigma_a|.
    % $$
    % Then the total number of actions in the signature compilation satisfies $|A_\Sigma| = |A| \cdot \bar{s}$. Moreover, 
    let $\Psi$ denote the total number of distinct numeric atomic conditions that appear in $\mathcal{P}$, that is, the number of atomic numeric conditions occurring in both action preconditions and in the goal. Then,
    $$|A_\Sigma| \leq |A| \cdot |\Psi|$$
    which means that the size of the compilation is polynomial with respect to the size of the original problem, in terms of the number of original preconditions.
\end{prop}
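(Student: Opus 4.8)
The plan is to bound $|A_\Sigma|$ by bounding each $|A_\Sigma(a)|$ separately and then summing over all actions $a \in A$. By definition, $A_\Sigma(a) = \{a_\sigma \mid \sigma \in \Sigma_a\}$, so $|A_\Sigma(a)| \le |\Sigma_a|$, with equality failing only when distinct conditions induce the same signature vector. Since $\Sigma_a = \{\sigma_{\psi,a} \mid \psi \in \Psi_a\}$ is the image of the map $\psi \mapsto \sigma_{\psi,a}$ on the finite set $\Psi_a$, we immediately get $|\Sigma_a| \le |\Psi_a|$.

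First I would make precise that $\Psi_a$, the set of relevant atomic conditions for $a$, is a subset of the global set of distinct atomic numeric conditions appearing in $\mathcal{P}$ (both in action preconditions and in the goal), whose cardinality is $|\Psi|$. This containment holds because every condition relevant to $a$ is, in particular, an atomic numeric condition occurring somewhere in the problem, so $\Psi_a \subseteq \Psi$ and hence $|\Psi_a| \le |\Psi|$. Chaining these inequalities gives $|A_\Sigma(a)| \le |\Sigma_a| \le |\Psi_a| \le |\Psi|$ for each $a$.

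Then I would sum over actions. Since $A_\Sigma = \bigcup_{a \in A} A_\Sigma(a)$, we have $|A_\Sigma| \le \sum_{a \in A} |A_\Sigma(a)| \le \sum_{a \in A} |\Psi| = |A| \cdot |\Psi|$, which is the claimed bound. Polynomiality (in fact linearity) in the problem size follows because both $|A|$ and $|\Psi|$ are linear in the size of the original description.

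The only real subtlety, and what I expect to be the main point requiring care rather than the main obstacle, is ensuring the map $\psi \mapsto \sigma_{\psi,a}$ is well defined as a function on $\Psi_a$ so that $|\Sigma_a| \le |\Psi_a|$ is genuinely an image-cardinality argument. Each $\sigma_{\psi,a}$ is determined coordinatewise by the sign choice $\chi_{\psi,a}(e_j)$, which depends only on the sign of the coefficient $w_{lhs(e_j)}$ in $\psi$; this is uniquely determined by $\psi$, so the map is single-valued and the bound is immediate. I would also remark that the collapsing step described before the proposition only merges signatures (reducing $|\Sigma_a|$ further), so it cannot violate the stated upper bound; thus no separate treatment of collapsing is needed to establish $|A_\Sigma| \le |A|\cdot|\Psi|$.
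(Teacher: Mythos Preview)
Your proposal is correct and follows essentially the same route as the paper: bound $|\Sigma_a|$ by $|\Psi_a|\le|\Psi|$ and sum over $a\in A$. If anything, you are slightly more careful than the paper, which writes $|\Sigma_a|=|\Psi_a|$ and $|A_\Sigma|=\sum_a|\Sigma_a|$ where $\le$ is what is actually needed and justified.
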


\begin{proof}
    By definition $\Sigma_a=\{\sigma_{\psi,a}\mid \psi\in\Psi_a\}$ and hence $|\Sigma_a|=|\Psi_a|$. Since $\Psi_a$ is a subset of the set of all numeric atomic conditions of the problem, we have $|\Psi_a|\leq |\Psi|$ for every $a\in A$. Therefore
    $|A_\Sigma|=\sum_{a\in A} |\Sigma_a| \le \sum_{a\in A} |\Psi| = |A|\cdot |\Psi|$
\end{proof}

The cardinality bound on the action set of the signature compilation remains tractable in practice, thanks to the dependence on the average of signatures between each action and the fact that the set of effects is usually much lower than the set of all variables $X$. The compilation remains almost linear in practice with only higher bounds if tighter sets of preconditions $\Psi_a$ are present, and can be computed polynomially with respect to the number of conditions of the problem. 

We define the heuristic $h^{add}_{\Sigma}$ as the numeric heuristic $\hat{h}^{add}_{hbd+}$ applied to the compiled problem obtained through the signature compilation of the problem.

\section{Extracting more information from subgoaling heuristics}

The additive heuristic $h^{add}_\Sigma$ computes costs by summing relaxed action contributions for each subgoal. While this provides strong and valuable search guidance, this relaxation ignores positive interactions among actions, causing a systematic overestimation of the real plan cost, especially in domains with overlapping effects. This overestimation is a direct consequence of counting the cost of actions multiple times when they contribute to several subgoals, even though, in a real plan, the same action instance may suffice for several achievements.

To address this limitation, the $h^{mrp}$ heuristic \cite{hmrp}, based on multi-repetition relaxed plans in simple numeric planning problems, merges redundant action contributions by tracking the maximum required count for each action rather than simply accumulating all individual contributions. This approach is also safe-prunning and allows $h^{mrp}$ to more accurately capture the dependencies and synergies between numeric actions and subgoals, reducing inadmissible overestimation and providing more informed search guidance for simple numeric planning problems. We then define $h^{mrp}_\Sigma$ as the $h^{mrp}$ heuristic under the signature compilation.

\section{Experiments}

In this section, we analyze the practical impact of the signature compilation used in the $h^{add}_\Sigma$ and $h^{mrp}_\Sigma$ heuristics. We implemented both heuristics in the ENHSP planner and compared their performance with existing heuristics within the Delayed Partial Expansions (DPEX) algorithm. We also evaluated their performance against the NextFLAP planner.

\paragraph{DPEX instantiation}
In our experiments we use DPEX with $f(n)=r_h(n,s)$ as node evaluation criterion, where  $r_h(n,s)=h(s)+\log(1+n)$, i.e., logarithmic rectification, where $n$ is the number of partial expansions of $s$. For each partial expansion, five successors are uniformly sampled via rejection sampling over the hypercube of control variables, rather than directly within the precondition polytope. Direct sampling over the polytope is not a trivial problem, and an interesting direction for future work. Experiments were conducted with a fixed seed on a 12th Gen Intel(R) Core(TM) i9-12900KF CPU running Ubuntu 22.04 LTS, with a 30-minute timeout and 8 GB memory limit, and five runs per problem.

\paragraph{Baselines.} Our experiments are evaluated against baseline search configurations that rely on domain-independent heuristics. Specifically, we compare against two heuristics: a blind  heuristic $h^{0}$ and the Manhattan goal-counting heuristic $h^{mgc}$, which computes the Manhattan distance to the numeric subgoals and counts the number of satisfied propositional subgoals. The Manhattan goal-counting heuristic is particularly effective in domains with numeric goal constraints, although its performance is highly domain specific and does not capture the causal structure of actions. We also compare against a standard run of the deterministic planner NextFLAP, capable of handling control parameters. The purpose of the baseline evaluation is to assess whether exploiting action and goal structure within heuristic computations yields a performance benefit; which is particularly relevant because previous evaluations with DPEX had to rely on structure-independent heuristics. 

\paragraph{Domains.} We use three domains introduced in POPCORN: \textsc{Cashpoint}, \textsc{Procurement} and \textsc{Terraria}. This domains only have propositional goal conditions and are relatively hard, especially the \textsc{Terraria} domain which uses six control variables at the same time and the \textsc{Procurement} domain that has a lot of causality between actions. We also use four enhanced domains from the numeric IPC as in \cite{aso-mollar2025ICAPS}: \textsc{Blockgrouping}, \textsc{Counters}, \textsc{Drone} and \textsc{Sailing}. The first three have numeric goal conditions, while the fourth one has only propositional. The \textsc{Drone} domain is the hardest one since it has a lot of dead-ends, while the \textsc{Sailing} domain stands out for being unbounded in the position of the boats. For each of this seven domains, we considered both continuous (C) and discrete (D) versions, i.e., we used control variables with and without decimals, respectively. For every domain, we generated 20 problems of increasing size, which can be found in the supplementary material.

\paragraph{Heuristic setup.} The overhead cost of setting up the heuristic is negligible, with an average time of approximately 15 milliseconds. Moreover, in Table~\ref{tab:setuptime} we  compare the number of actions with respect to the original problem $A$ and the compiled ones $A_O$ and $A_\Sigma$, and we observe that the overhead of the signature compilation $A_\Sigma$ is far lower than that of the first compilation $A_O$. In certain domains such as \textsc{Cashpoint}, \textsc{Procurement} or \textsc{Terraria} there is only one action needed in the signature compilation, since the control variables act as unbounded amounts of production with no upper limit. Especially in this domains, the difference with respect to the optimistic compilation $A_O$ is clear. As an example, in the \textsc{Procurement} domain every action has an average of 3 control-dependent effects.

\begin{table}[th]
\centering
\small
\begin{tabular}{lccc}
\hline
\textbf{Domain} & $|A|$ & $|A_\Sigma|$ & $|A_O|$  \\
\hline
\textsc{Blockgrouping} & 114.60 & 229.20 & 229.20  \\
\textsc{Cashpoint}      & 7379.00 & 7379.00 & 8758.00  \\
\textsc{Counters}       & 15.00 & 26.00 & 30.00 \\
\textsc{Drone}          & 14.50 & 20.50 & 25.375  \\
\textsc{Procurement}    & 470.00 & 470.00 & 1700.50  \\
\textsc{Sailing}        & 22.70 & 53.45 &  53.45  \\
\textsc{Terraria}       & 40.00 & 40.00 & 120.00 \\
\hline
\end{tabular}
\caption{Average size of action sets $A$, $A_\Sigma$ and $A_O$.}
\label{tab:setuptime}
\end{table}

\begin{figure*}[th]
    \centering
        \includegraphics[width=.99\linewidth]{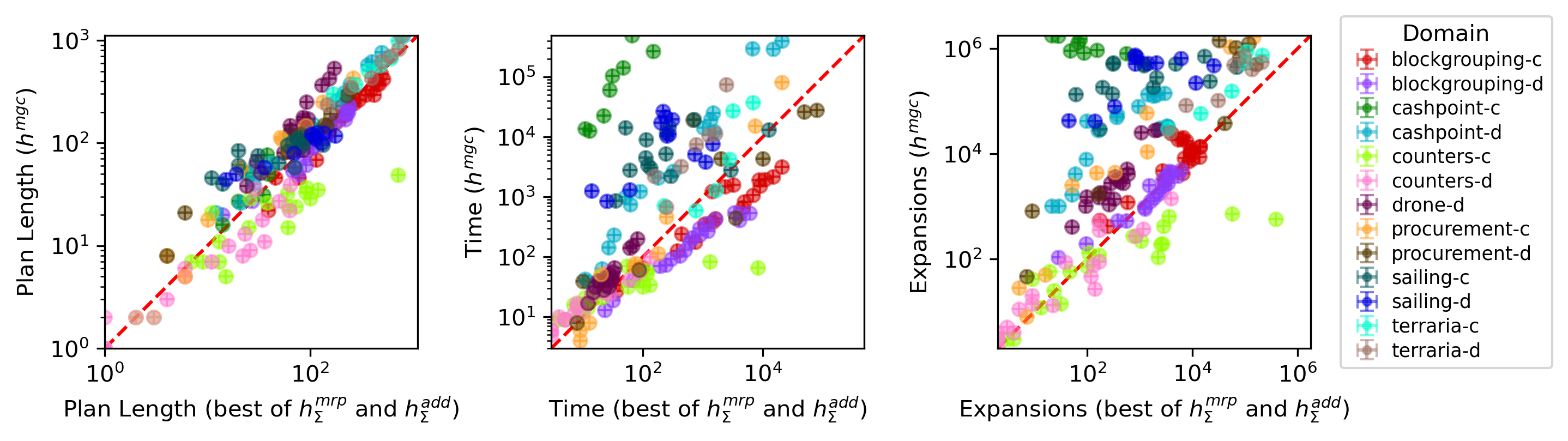}

    \caption{Comparison of the runs of $h^{add}_\Sigma$ or $h^{mrp}_\Sigma$ heuristic versus baselines in terms of plan length, time and number of (partial) expansions, for the best case between the 5 runs.}
    \label{fig:comparison}
\end{figure*}

\paragraph{Results.} Table~\ref{tab:results} reports coverage results for each domain and problem. For this analysis, we consider whether at least one run of DPEX successfully solves the problem. We also show runs solved by either $h^{add}_\Sigma$ or $h^{mrp}_\Sigma$ in the $h_\Sigma$ column, and by NextFLAP in the $NF$ column.

\begin{table}[th]
\centering
\small
\resizebox{\columnwidth}{!}{
\begin{tabular}{l|ccccc|c}
\hline
\textbf{Domain} & $h^{0}$ & $h^{mgc}$ & $h^{add}_\Sigma$ & $h^{mrp}_\Sigma$ & $h_\Sigma$ & $NF$\\
\hline
\textsc{Blockg.-c} (20) & 0  & 20 & 20 & 20 & 20 & 18\\
\textsc{Blockg.-d} (20) & 0  & 20 & 20 & 20 & 20 & 16\\
\textsc{Cashp.-c} (20)     & 10 & 8 & 20 & 20 & 20 & 10\\
\textsc{Cashp.-d} (20)    & 10 & 17 & 20 & 20 & 20 & 9\\
\textsc{Count.-c} (20)     & 12 & 20 & 20 & 19 & 20 & 19\\
\textsc{Count.-d} (20)     & 16 & 20 & 20 & 20 & 20 & 17\\
\textsc{Drone-c} (20)        & 12 & 0  & 2  & 13 & 13 & 0\\
\textsc{Drone-d} (20)        & 18 & 18 & 18 & 18 & 18 & 0\\
\textsc{Procur.-c} (20)  & 8  & 10 & 16 & 10 & 16 & 10\\
\textsc{Procur.-d} (20)  & 5  & 9  & 14 & 11 & 14 & 10\\
\textsc{Sail.-c} (20)      & 0  & 14 & 12 & 20 & 20 & 2\\
\textsc{Sail.-d} (20)      & 1  & 12 & 20 & 20 & 20 & 0\\
\textsc{Terr.-c} (20)     & 6 & 6 & 20 & 4 & 20 & 0\\
\textsc{Terr.-d} (20)     & 6 & 7 & 20 & 3 & 20 & 0\\
\hline
\textsc{Total} (280) & 105 & 181 & 242 & 218 & 261 & 111\\
\hline
\end{tabular}
}
\caption{Coverage for domain and heuristic employed. The column $h_\Sigma$ corresponds to instances solved by either one of $h^{add}_\Sigma$ or $h^{mrp}_\Sigma$, and $NF$ corresponds to instances solved by NextFLAP. 
}
\label{tab:results}
\end{table}

Overall, the compilation-based approach is consistently dominant across all domains. We observe that either the $h^{add}_\Sigma$ heuristic or the $h^{mrp}_\Sigma$ heuristic systematically outperform the $h^{mgc}$ executions. This is particularly noteworthy because $h^{mgc}$ is specifically tailored for purely numeric domains such as \textsc{Blockgrouping}, \textsc{Counters} or \textsc{Sailing}, where the results show that incorporating action structure into heuristic computation is generally beneficial, especially for the \textsc{Sailing} domain.

The assumptions introduced by the compilation’s relaxation directly affect the performance of both heuristics, as reflected in the two columns corresponding to $h^{add}_\Sigma$ and $h^{mrp}_\Sigma$. The compilation purposefully ignores control variables values when computing heuristic estimates. This can result in overly aggressive commitments to actions that do not properly account for control-dependent feasibility conditions. The effect is more pronounced in domains with a larger number of control variables or where these variables play a significant role and participate in many constraints, such as \textsc{Procurement} or \textsc{Drone}, and might explain the performance degradation.

This tendency of the compilation committing too aggressively to certain actions also interacts with the fact that the additive heuristics overestimate the true cost. Interestingly, there appears to be a nontrivial interaction between the compilation’s underestimation and the heuristic’s overestimation. The analysis suggests a correlation between these two effects when looking at coverage in domains such as \textsc{Procurement} or \textsc{Counters}, which perform better in $h^{add}_\Sigma$ runs than in $h^{mrp}_\Sigma$ ones, and it supports the hypothesis that both phenomena may “balance” each other out, partially compensating for their respective flaws and ultimately improving performance. The $h^{mrp}_\Sigma$ correction, although better for the general case, might worsen results in some specific scenarios.

Regarding the comparison with NextFLAP, we observe that its performance is comparable to blind search with $h^0$. This is a relevant result, because our approach exploits full state information to perform heuristic search, and this leads to significantly better scalability in practice, as confirmed by our analysis.

Following a more detailed analysis, Figure \ref{fig:comparison} presents a comparison in terms of plan length, time and number of (partial) expansions for each domain individually regarding $h_\Sigma$ with respect to the best of the baselines $h^{mgc}$. We evaluate only the instances solved by both of the proposed approaches, comparing the best-performing run of either strategy. Extended figures with additional comparisons can be found in the supplementary material.

\;

We observe a clear dominance of our approach over the goal-sensitive strategy, as well as a consistent trend toward improved performance compared to the numeric heuristic $h^{mgc}$. As expected, $h^{mgc}$ remains slightly superior regarding analyzed metrics for the pure numerical domains, such as \textsc{Blockgrouping} or \textsc{Counters}, when considering the instances solved by both approaches. This behavior is likely tied to the nature of the compilation, further motivating the search for more informed ways of exploiting the infinite yet structured action space present in these problems.

\section{Conclusions and future work}

As a conclusion, we have proposed a heuristic based on the numeric subgoaling relaxation that is applicable to the setting of control variables and achieves competitive results compared to standard domain-independent heuristics. This heuristic is based on an optimistic relaxation, which transforms the original problem into a tractable simple numeric planning problem. This compilation, however, is exponential in the number of effects of the problem at hand, so to mitigate this potential combinatorial explosion we introduce the signature compilation, which retains only actions that contribute positively to conditions. 

Our approach provides a practical and theoretically sound method for applying standard numeric heuristics based on subgoaling in the context of control variables, supported with experimental evaluation. We observed that the over-approximation inherent in the optimistic compilation, which ignores the control component in the decision-making, can lead to decreased performance in some domains.

This work therefore opens a promising line of research for future exploration of heuristic computation in problems with control variables. In particular, by identifying concrete valuations for which possible achievers are more informative in order to bridge the gap with most geometric domains. Although addressing this challenge is nontrivial, our approach provides a solid foundation for developing increasingly informed and effective heuristics in this setting.

\bibliography{aaai2026}

@inproceedings{
aso-mollar2025ICAPS,
title={A Sampling Approach to Planning with Infinite Domain Control Variables},
author={{\'A}ngel Aso-Mollar and Diego Aineto and Enrico Scala and Eva Onaindia},
  booktitle = {Proceedings of the Thirty-Fifth International  Conference on
               Automated Planning and Scheduling, {ICAPS-25}},
  pages     = {149--153},
  year      = {2025},

}

@inproceedings{
aso-mollar2025IJCAI,
title={Handling Infinite Domain Parameters in Planning Through Best-First Search with Delayed Partial Expansions},
author={{\'A}ngel Aso-Mollar and Diego Aineto and Enrico Scala and Eva Onaindia},
  booktitle = {Proceedings of the Thirty-Fourth International Joint Conference on
               Artificial Intelligence, {IJCAI-25}},
  publisher = {International Joint Conferences on Artificial Intelligence Organization},
  pages     = {8456--8464},
  year      = {2025},
}

@article{subgoaling20,
author = {Scala, Enrico and Haslum, Patrik and Thiebaux, Sylvie and Ramirez, Miquel},
year = {2020},
month = {08},
pages = {691-752},
title = {Subgoaling Techniques for Satisficing and Optimal Numeric Planning},
volume = {68},
journal = {Journal of Artificial Intelligence Research},
doi = {10.1613/jair.1.11875}
}

@article{bonet01,
title = {Planning as heuristic search},
journal = {Artificial Intelligence},
volume = {129},
number = {1},
pages = {5-33},
year = {2001},
issn = {0004-3702},
doi = {https://doi.org/10.1016/S0004-3702(01)00108-4},
url = {https://www.sciencedirect.com/science/article/pii/S0004370201001084},
author = {Blai Bonet and Héctor Geffner},
keywords = {Planning, Strips, Heuristic search, Domain-independent heuristics, Forward/backward search, Non-optimal planning, Graphplan},
}

@inproceedings{scala18,
  title     = {Effect-Abstraction Based Relaxation for Linear Numeric Planning},
  author    = {Dongxu Li and Enrico Scala and Patrik Haslum and Sergiy Bogomolov},
  booktitle = {Proceedings of the Twenty-Seventh International Joint Conference on
               Artificial Intelligence, {IJCAI-18}},
  publisher = {International Joint Conferences on Artificial Intelligence Organization},
  pages     = {4787--4793},
  year      = {2018},
}

@article{FoxL03,
  author       = {Maria Fox and
                  Derek Long},
  title        = {{PDDL2.1:} An Extension to {PDDL} for Expressing Temporal Planning Domains},
  journal      = {Journal of Artificial Intelligence Research},
  volume       = {20},
  pages        = {61--124},
  year         = {2003}
}

@article{ShinD05,
  author       = {Ji{-}Ae Shin and
                  Ernest Davis},
  title        = {{Processes and Continuous Change in a SAT-based Planner}},
  journal      = {Artificial Intelligence},
  volume       = {166},
  number       = {1-2},
  pages        = {194--253},
  year         = {2005}
}

@inproceedings{SavasFLM16,
  author       = {Emre Savaş and
                  Maria Fox and
                  Derek Long and
                  Daniele Magazzeni},
  title        = {{Planning Using Actions with Control Parameters}},
  booktitle    = {Procedings of the Twenty-Second European Conference on Artificial Intelligence, ECAI-16},
  series       = {Frontiers in Artificial Intelligence and Applications},
  volume       = {285},
  pages        = {1185--1193},
  publisher    = {{IOS} Press},
  year         = {2016}
}

@inproceedings{heesch24,
  author       = {René Heesch and Alessandro Cimatti and Jonas Ehrhardt and Alexander Diedrich and Oliver Niggemann},
  title        = {{A Lazy Approach to Neural Numerical Planning with Control Parameters}},
  booktitle    = {{European Conference on Artificial Intelligence} 2024},
  series       = {Frontiers in Artificial Intelligence and Applications},
  pages        = {4262--4270},
  publisher    = {{IOS} Press},
  year         = {2024}
}

@article{nextflap24,
title = {A Hybrid Approach for Expressive Numeric and Temporal Planning with Control Parameters},
journal = {Expert Systems with Applications},
volume = {242},
pages = {122820},
year = {2024},
issn = {0957-4174},
doi = {https://doi.org/10.1016/j.eswa.2023.122820},
url = {https://www.sciencedirect.com/science/article/pii/S0957417423033225},
author = {Oscar Sapena and Eva Onaindia and Eliseo Marzal},
keywords = {Temporal planning, Numeric planning, Control parameters, SMT},
}

@inproceedings{Scala2016IntervalBasedRF,
  title={Interval-Based Relaxation for General Numeric Planning},
  author={Enrico Scala and Patrik Haslum and Sylvie Thi{\'e}baux and Miquel Ram{\'i}rez},
  booktitle={Proceedings of the Twenty-Second European Conference on Artificial Intelligence},
  year={2016},
  url={https://api.semanticscholar.org/CorpusID:27984436}, 
  pages={655--663},
  publisher    = {{IOS} Press},
}

@inproceedings{htncompilation,
  title     = {On Guiding Search in HTN Planning with Classical Planning Heuristics},
  author    = {Höller, Daniel and Bercher, Pascal and Behnke, Gregor and Biundo, Susanne},
  booktitle = {Proceedings of the Twenty-Eighth International Joint Conference on
               Artificial Intelligence, {IJCAI-19}},
  publisher = {International Joint Conferences on Artificial Intelligence Organization},
  pages     = {6171--6175},
  year      = {2019},
  month     = {7},
  doi       = {10.24963/ijcai.2019/857},
  url       = {https://doi.org/10.24963/ijcai.2019/857},
}

@article{hmrp, title={Search-Guidance Mechanisms for Numeric Planning Through Subgoaling Relaxation }, volume={30}, url={https://ojs.aaai.org/index.php/ICAPS/article/view/6665}, DOI={10.1609/icaps.v30i1.6665}, abstractNote={&lt;p&gt;Recently, a new decomposition based relaxation has been proposed for numeric planning problems. Roughly, this relaxation is grounded on the identification of regression-based necessary conditions for the satisfaction of sets of numeric subgoals. So far, it has been used to define novel heuristics that are able to provide great guidance in problems exhibiting a pronounced numeric structure. This paper investigates how to further exploit this relaxation; it does so by introducing the notion of the multi-repetition relaxed plan. The multi-repetition plan annotates actions with the number of times such actions need to be executed. We use this structure for different purposes: extraction of a concrete relaxed plan based heuristic, definition of subgoaling based helpful actions, and definition of what we call up-to-jumping actions. Up-to-jumping actions allow us to deeply leverage from the metric structure of the problem and devise an informed search strategy that can collapse several decision steps. We experimentally analyze a forward state space planner equipped with these novel mechanisms across several planning benchmarks, showing the benefit of the ideas presented in the paper.&lt;/p&gt;}, number={1}, journal={Proceedings of the Thirtieth International Conference on Automated Planning and Scheduling}, author={Scala, Enrico and Saetti, Alessandro and Serina, Ivan and Gerevini, Alfonso E.}, year={2020}, month={Jun.}, pages={226-234} }

\end{document}